\pgfplotsset{compat=newest} 
\pgfplotsset{plot coordinates/math parser=false} 
\newtheorem{remark}{Remark}
\newtheorem{lemma}{Lemma}
\newcommand{\pathfig}{./Figures}
\newcommand{\pathfigCylindereigfuc}{./Figures/Cylinder/Plots}
\newcommand{\pathfigData}{./Figures/Superposedflow/data}
\newcommand{\btQ}{\mathbf{\widetilde{Q}}}
\newcommand{\btR}{\mathbf{\widetilde{R}}}
\newcommand{\btJ}{\mathbf{\widetilde{J}}}
\newcommand{\newmethod}{\texttt{sLSI}}
\newcommand{\benchmark}{\texttt{LSI}}
\newcommand{\dt}{\texttt{dt}}
\begin{document}
%\title{Stability-Guaranteed Learning for Continuous Linear Systems}
\title{Inference of Continuous Linear Systems from Data with Guaranteed Stability}
\author[1]{Pawan Goyal}
\affil[1]{Max Planck Institute of Dynamics of Complex Technical Systems\authorcr
	\email{goyalp@mpi-magdeburg.mpg.de}, \orcid{0000-0003-3072-7780}}

\author[2]{Igor Pontes Duff}
\affil[2]{Max Planck Institute of Dynamics of Complex Technical Systems\authorcr
	\email{pontes@mpi-magdeburg.mpg.de}, \orcid{0000-0001-6433-6142}}

\author[3]{Peter Benner}
\affil[3]{Max Planck Institute of Dynamics of Complex Technical Systems, Otto-von-Guericke University Magdeburg \authorcr
	\email{benner@mpi-magdeburg.mpg.de}, \orcid{0000-0003-3362-4103}}

\abstract{
Machine-learning technologies for learning dynamical systems from data play an important role in engineering design. This research focuses on learning continuous linear models from data.   \emph{Stability}, a key feature of dynamic systems, is especially important in design tasks such as prediction and control. Thus, there is a need to develop methodologies that provide stability guarantees. To that end, we leverage the parameterization of stable matrices proposed in  [Gillis/Sharma, Automatica, 2017] to realize the desired models. Furthermore, to avoid the estimation of derivative information to learn continuous systems, we formulate the inference problem in an integral form. We also discuss a few extensions, including those related to control systems. Numerical experiments show that the combination of a stable matrix parameterization and an integral form of differential equations allows us to learn stable systems without requiring derivative information, which can be challenging to obtain in situations with noisy or limited data.
}

\keywords{Continuous systems, linear dynamical models, stability, Lyapunov function, Runge-Kutta scheme.}
\novelty{\begin{itemize}
		\item Learning stability-guaranteed continuous linear systems.
		\item Utilizing a stable matrix parameterization.
		\item Employing an integral form of differential equations to learn continuous systems, hence does not require estimating derivative information for data.
		\item Several numerical examples demonstrate the stability-guarantee of the learned models, which otherwise yield unstable models despite good data fit.  
\end{itemize}}
\maketitle

\section{Introduction}

Linear dynamical systems (LDS) are a widely used class of dynamical systems for describing underlying physical processes in science and engineering. These models are also utilized to understand the dynamic behavior of more complex nonlinear dynamical systems around operating points or equilibria.  Consequently, LDS  can be employed for optimization, and the design of feedback control laws.
However, for complex physical processes, mathematical models are often hard to build from first-principles and physical parameters. Thus, there is a growing interest in developing mathematical models using experimental data, which is widely available due to advances in measurement technology. 
%
% In various set-ups, the system realization is unknown, and the user has only access to simulated and experimental measurements. This motivates the development of data-driven modeling tools, which aims at discovering dynamic systems from the available data. 
This work aims at learning continuous-time LDS from data while ensuring that the learned models are stable, which is necessary for goals such as long-term predictions.

Significant research has been dedicated to learning discrete-time linear dynamic systems (LDS) from time-domain data in the literature. One method that has gained widespread attention is dynamic mode decomposition (DMD), due to its accuracy in predicting dynamic systems, its connection to the Koopman operator  \cite{rowley2009spectral}, and its ease of implementation  \cite{morKutBBetal16}. DMD was originally developed in the field of fluid dynamics  \cite{morSch10}, and several variations have been proposed since, such as compressed DMD  \cite{brunton2015compressed}, extended DMD  \cite{williams2015data}, and DMD with control  \cite{morProBK16}. In the field of systems and control, methods for learning LDS from input-output data include the eigensystem realization algorithm  \cite{juang1985eigensystem}, and subspace methods \cite{van1994n4sid, viberg1995subspace}. There are also several methods for learning LDS from frequency data measurements, such as the Loewner framework  \cite{morMayA07}, vector fitting \cite{morGusS99,morDrmGB15a}, and the AAA algorithm  \cite{nakatsukasa2018aaa}. In addition, the operator inference method \cite{morPehW16} was proposed for learning linear or polynomial dynamics from state-space data in continuous-time systems. This method can be viewed as a continuous-time version of DMD, at least for linear systems. Indeed,  both involve solving a least squares problem from data. However, a key difference between both is that operator inference utilizes the derivative data, while  DMD uses the shift state data. 

Many physical processes are typically stable, meaning their state variables are well-behaved and globally bounded. These processes are often described by a set of stable differential equations, which is necessary for an accurate representation of the physics, as well as for numerical computations. However,  stability is often neglected in many frameworks for learning dynamical systems. Consequently, unstable learned dynamical systems may be inferred even if the original dynamics are stable, leading to failure in predicting long-term behavior \cite{chui1996realization}. Stability constraints based on eigenvalues can be applied to an optimization problem, but this results in a non-convex constraint optimization problem that does not scale well with complexity.
In the literature,  the identification of linear stable discrete-time systems using matrix inequality constraints has been addressed in  \cite{boots2007constraint,lacy2003subspace} in the context of subspace identification methods. In  DMD perspectives, a nonsmooth optimization method was suggested in \cite{morBenHM18} to enforce stability. Very recently,  using parametrization for discrete stable matrices \cite{gillis2019approximating}, the authors guarantee the stability of discrete linear dynamical systems, see \cite{mamakoukas2020memory}. More recently,  operator inference with matrix inequality constraints was discussed in \cite{sawant2021physics}  for a particular structured case where the linear matrix or operator is symmetric and negative definite.

In this study, we propose a framework for learning continuous-time LDS enforcing stability by construction. To achieve this, we exploit the stable matrix parameterization proposed in  \cite{gillis2017computing}, which guarantees stability for continuous-time LDS. This allows us to formulate the identification problem as an unconstrained optimization loss function. Hence,  this methodology does not require any stability constraints, e.g., by means of eigenvalues or matrix inequalities, and therefore it scales to more significant complexity problems. Additionally, we use an integral form of differential equations to avoid the need to estimate derivative information, which can be challenging to obtain for noisy or scarce data. This idea is highly inspired by the recent advances in Neural ODEs \cite{chen2018neural} and the use of integrating schemes in learning of dynamical systems \cite{gonzalez1998identification,goyal2022discovery, uy2022operator}. Through numerical experiments, we illustrate that the new methodology (\newmethod) yields systems that are guaranteed to be stable. In contrast, the classical methods---that do not enforce this property---can produce unstable models. Moreover, we show that the inference methodology is suitable for high-dimensional data by combining it with a compression step to obtain a low-dimensional data representation. 

The reminder of this paper is organized as follows. In \Cref{sec:DMD}, we provide a brief overview of a method for learning continuous-time LDS through a DMD-like procedure. The learned models are obtained through the solution of an unconstrained least-squares problem using the available data. In \Cref{sec:Stabil}, we recall the stability concept for continuous-time LDS and  present the characterization for stable matrices, proposed in \cite{gillis2017computing}, which forms the basis for our proposed methodology. We then formulate a suitable optimization problem for learning LDS, enforcing stability. We further show how to rewrite the optimization problem as an unconstraint one. In \Cref{sec:intergrationscheme}, we discuss incorporating integration schemes in learning continuous systems to avoid the necessity of derivative information, which yields a robust performance of \newmethod~under noisy and limited data conditions. In \Cref{sec:furtherconsider}, we show how the methodology can be adapted for high-dimensional data and discuss a few possible extensions. Finally, \Cref{sec:Exp} presents numerical experiments illustrating that \newmethod~guarantees to produce stable models, and \Cref{sec:Conc} provide a summary of our findings.

%%%%%%%%%%%%%%%%%%%%%%%%%%%%%%
%%%%%%%%%%%%%%%%%%%%%%%%%%%%%%
%%%%%%%%%%%%%%%%%%%%%%%%%%%%%%
\section{Continuous-time Linear Systems and its Inference}\label{sec:DMD}
Throughout this work, we consider continuous-time linear dynamical systems  as follows:
\begin{equation}\label{eq:LinearDynamics}
	\frac{d}{dt}\bx(t) =  \bA\bx(t), \quad \bx(0) = \bx_0,
\end{equation}
where $\bx(t)\in\R^n$ is the state variable, $\bx_0 \in \R^{n}$ is the initial condition, and $\bA\in \R^{n \times n}$ is a matrix.  The inclusion of control variables in such systems is discussed in \Cref{sec:furtherconsider}. Our goal is to learn a stable linear dynamical system from state measurements.
The DMD algorithm finds the linear operator $\bA\in \R^{n \times n}$ that best fits the available data in the desired norm. In this work, we will focus on the analog to the DMD algorithm for continuous-time systems as \eqref{eq:LinearDynamics}. This methodology is also referred to in the literature as \emph{operator inference} with only linear term  \cite{morPehW16}. For continuous-time systems, we assume that snapshots of the state $\bx(t)$ and the state derivative $ \frac{d}{dt}\bx(t)$ are available for time instances $\{t_1, \dots, t_N\}$. These snapshots are collected in the snapshot matrices as follows:
\begin{equation}\label{ea:data}
	\bX = \begin{bmatrix} 
		| & | & & |
		\\ 
		\bx(t_0) & \bx(t_1)& \ldots & \bx(t_N) 
		\\
		| & | & & |
	\end{bmatrix} \in \R^{n \times N},
	\quad \text{and} \quad 
	\dot{\bX} = \begin{bmatrix} 
		| & | & & |
		\\ 
		\frac{d}{dt}\bx(t_0) & \frac{d}{dt}\bx(t_1)& \ldots & \frac{d}{dt}\bx(t_N) 
		\\
		| & | & & |
	\end{bmatrix} \in \R^{n \times N}.
\end{equation}
In terms of these matrices, the operator $\bA$ can be inferred via a least squares fitting. More precisely, the matrix $\bA$ is the solution to the optimization problem
\begin{equation}\label{eq:DMD_cont}
	\bA = \underset{\tilde\bA \in \R^{n \times n}}{\arg\min} \left\|\dot{\bX}-\tilde\bA\bX\right\|_F,
\end{equation}
where $\| \cdot \|_F$ is the Frobenius norm. Additionally, the minimal-norm solution for the previous optimization problem can be given as follows:
\[ \bA = \bX^{\dagger}\dot{\bX}, \]
where $\bX^{\dagger}$ denotes the pseudo-inverse of the snapshot matrix $\bX$.

It is worth noticing that the learned linear model using \cref{eq:DMD_cont} is not guaranteed to be stable, even if the available data was obtained from an underlying stable system. Another drawback of the linear regression~\eqref{eq:DMD_cont} is that it requires knowledge of the derivative of the state. Indeed, it can be difficult to accurately estimate this derivative information when dealing with noisy or limited data. To circumvent this issue, the authors e.g., in \cite{gonzalez1998identification,chen2018neural, goyal2022discovery,uy2022operator} have suggested reformulating the problem in an integral form, which has been shown to provide robust performance. In this work, we also employ a similar concept to avoid computation of the derivative information which shall be discussed more in \Cref{sec:intergrationscheme}.

%%%%%%%%%%%%%%%%%%%%%%%%%
%%%%%%%%%%%%%%%%%%%%%%%%%
%%%%%%%%%%%%%%%%%%%%%%%%%
\section{Stability-guaranteed Learning}\label{sec:Stabil} 
We begin by describing the characterization of stability for linear dynamical systems~\eqref{eq:LinearDynamics} which will be used on this work. It is known that a linear system \eqref{eq:LinearDynamics} is stable if and only if all of the eigenvalues of the matrix $\bA$ are in the closed left half complex plane and all eigenvalues on the imaginary axis are semi-simple. Additionally, the linear system \eqref{eq:LinearDynamics} is asymptotically stable if and only if all of the eigenvalues of the matrix $\bA$ are in the closed left half complex plane. 

\paragraph{Parametrization of stable matrices:} A recent characterization of stable matrices is discussed in \cite[Lemma~1]{gillis2017computing}. In this work, the authors show that every stable matrix $\bA$ can be written as: %in a dissipative Hamiltonian 
\begin{equation}\label{eq:par_stabil}
	\bA = (\bJ - \bR)\bQ  
\end{equation}
where $\bJ = -\bJ^{\top}$ is a skew symmetric matrix, $\bR = \bR^{\top} \geq 0 $ i symmetric positive semi-definite,  and $\bQ = \bQ^\top>0$ is symmetric positive definite. The authors discussed such parametrization to determine the distance to stability of any given matrix. 

It is worthwhile mentioning that the parametrization \eqref{eq:par_stabil} encodes a Lyapunov function for the underlying linear dynamical system, which is stated in the following lemma.
\begin{lemma}
Consider a linear dynamical system as in \eqref{eq:LinearDynamics} where $\bA$ takes the form given in \eqref{eq:par_stabil}. Then,  the quadratic function
\begin{equation*}
	\bV(\bx(t)) = \dfrac{1}{2}\bx(t)^{\top} \bQ\bx(t)
\end{equation*}
is a Lyapunov function of the system. 
\end{lemma}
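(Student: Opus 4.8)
The plan is to verify directly the two defining properties of a Lyapunov function for the system \eqref{eq:LinearDynamics}: first, that $\bV$ is positive definite, i.e. $\bV(\bx) > 0$ for $\bx \neq 0$ with $\bV(0) = 0$; and second, that its derivative along trajectories is non-positive, $\dot{\bV}(\bx(t)) \leq 0$. The first property is immediate from the hypothesis $\bQ = \bQ^\top > 0$, since $\bV(\bx) = \tfrac{1}{2}\bx^\top \bQ \bx$ is then a positive definite quadratic form. Thus the substance of the argument lies entirely in controlling the sign of the derivative.

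For the second property, I would differentiate $\bV$ along a solution $\bx(t)$ of \eqref{eq:LinearDynamics}. Applying the product rule and using the symmetry $\bQ = \bQ^\top$ to merge the two resulting terms (which are transposes of one another and hence equal as scalars), one obtains
\begin{equation*}
	\dot{\bV}(\bx(t)) = \bx(t)^\top \bQ \dot{\bx}(t) = \bx(t)^\top \bQ \bA \bx(t).
\end{equation*}
Substituting the parametrization $\bA = (\bJ - \bR)\bQ$ and again invoking the symmetry of $\bQ$ then rewrites this as $\dot{\bV} = (\bQ\bx)^\top (\bJ - \bR)(\bQ\bx)$, i.e. a quadratic form evaluated at the vector $\bQ\bx$.

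The final step exploits the structural hypotheses on $\bJ$ and $\bR$. Since $\bJ = -\bJ^\top$ is skew-symmetric, the quadratic form $(\bQ\bx)^\top \bJ (\bQ\bx)$ vanishes identically (its transpose equals its negative), so the skew part contributes nothing. Since $\bR = \bR^\top \geq 0$ is positive semi-definite, $(\bQ\bx)^\top \bR (\bQ\bx) \geq 0$, and therefore $\dot{\bV} = -(\bQ\bx)^\top \bR (\bQ\bx) \leq 0$ for every state. Together with the positive definiteness of $\bV$, this establishes the Lyapunov property.

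I do not anticipate a genuine obstacle, as the conclusion is a direct algebraic consequence of the parametrization; the only points demanding care are the correct use of the symmetry of $\bQ$ when combining the two terms of the product rule, and the observation that the skew-symmetric contribution annihilates the quadratic form. It is worth remarking that the argument yields $\dot{\bV} \leq 0$ rather than strict negativity, which is consistent with the (not necessarily asymptotic) stability encoded by \eqref{eq:par_stabil}; a strict decrease, and hence asymptotic stability by the classical Lyapunov theorem, would follow under the stronger hypothesis $\bR > 0$, since then $(\bQ\bx)^\top \bR(\bQ\bx) > 0$ for all $\bx \neq 0$ because $\bQ$ is invertible.
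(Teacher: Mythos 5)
Your proposal is correct and follows essentially the same route as the paper's proof: differentiating $\bV$ along trajectories, substituting $\bA = (\bJ-\bR)\bQ$, passing to the quadratic form in $\tilde\bx = \bQ\bx$, and using skew-symmetry of $\bJ$ and $\bR \geq 0$ to conclude $\dot{\bV} \leq 0$. Your added remarks on positive definiteness of $\bV$ and on asymptotic stability under $\bR > 0$ match the paper's surrounding discussion (the latter is its Remark~1).
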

\begin{proof}
Note that 
\begin{align*}
	\frac{d}{dt}\bV(\bx(t)) &= \dfrac{1}{2}\left(\frac{d}{dt}\bx(t)\right)^{\top}\bQ\bx(t) + \dfrac{1}{2} \bx(t)^{\top}\bQ\left(\frac{d}{dt}\bx(t)\right) \\
	&= \bx(t)^{\top}\bQ\left(\frac{d}{dt}\bx(t)\right)  
	= \bx(t)^{\top}\bQ\bA\bx(t)\\
	& = \bx(t)^{\top}\bQ(\bJ -\bR)\bQ\bx(t) \\
	& = \tilde\bx(t)^{\top}(\bJ -\bR)\tilde\bx(t) \quad \text{with} ~~\tilde\bx(t) = \bQ\bx(t) \\
	& = \tilde\bx(t)^{\top}(\bJ -\bR)\tilde\bx(t) \\
   &= \cancelto{0}{ \tilde\bx(t)^{\top}\bJ\tilde\bx(t)} -  \tilde\bx(t)^{\top}\bR\tilde\bx(t) \leq 0,
\end{align*}
because $\bR \geq 0$.
\end{proof}

\begin{remark} We also highlight that if $\bR$ is a positive definite matrix, $	\frac{d}{dt}\bV(\bx(t)) <0$. As a consequence, the matrix $\bA$ is asymptotically stable.  
\end{remark}

The parametrization in \eqref{eq:par_stabil} enables us to characterize every continuous-time stable dynamical system. Instead of relying on constraints involving matrix inequalities or eigenvalues, we will utilize this parametrization to ensure the stability of the learned dynamical system.

%Another classical way of characterizing stability is thought the \emph{Lyapunov's Second Method}. This method make use generalized energy functions, also called \emph{Lyapunov functions}.  For linear dynamical systems, the Lyapunov function $\bV(\bx):\R^n \rightarrow \R$ are typically choose to be of quadratic  form on the state variable $\bx(t)\in \R^n$, i.e.,
%\begin{equation}
%\bV(\bx(t)) = \bx(t)^{\top} \bQ\bx(t),  
%\end{equation}
%for a given symmetric positive definite matrix $\bQ \in \R^{n \times n}.$ By taking the time derivative of the Lyapunov function $\bV(\bx(t))$, we have
%\begin{align*}
%\frac{d}{\dt}\bV(\bx(t)) &= \left(\frac{d}{\dt}\bx(t)\right)^{\top}\bQ\bx(t) +  \bx(t)^{\top}\bQ\left(\frac{d}{\dt}\bx(t)\right) 
%\\
% &= \bx(t)^{\top}\left(\bA^{\top}\bQ +\bQ\bA\right)\bx(t).
%\end{align*}
%As a consequence if 
%\begin{equation}\label{eq:LyapIneq}
%\bA^{\top}\bQ +\bQ\bA <0
%\end{equation}
%this implies that
%\begin{equation*}
%\frac{d}{\dt}\bV(x(t)) <0, \quad \text{for all  } t\geq 0.
%\end{equation*}
%As a consequence, the generalized energy function $\bV(x(t))$ is decreasing on time, i.e., this energy is being dissipated though the trajectories $\bx(t)$.  As a conclusion, if there exists a positive define  matrix $\bQ$ such that the matrix inequality \eqref{eq:LyapIneq} is satisfied, the linear dynamical system is asymptotic stable. 
%%
%
\paragraph{Stability informed learning}\label{subsec:stability_parameterization}
As noted earlier, the matrix decomposition \eqref{eq:par_stabil} guarantees that the matrix $\bA$ is stable. So,  we will use the stable matrix parametrization to guarantee that the learned linear systems are stable. To this aim, we formulate our objective function as follows:
\begin{equation}\label{eq:stable_learning1}
	\begin{aligned}
		(\bJ, \bR, \bQ) &= \underset{\tilde\bJ, \tilde\bR,\tilde \bQ}{\arg\min} \left\|\dot{\bX}-(\tilde\bJ - \tilde\bR)\tilde\bQ\bX\right\|_F,\\
		& \qquad \text{subject to} ~~ \tilde \bJ = -\tilde\bJ^\top,  \tilde\bR =  \tilde\bR^\top \geq 0,  \tilde\bQ =  \tilde\bQ^\top > 0.
	\end{aligned}
\end{equation}
Once we have the optimal value for the tuple $(\bJ,\bR,\bQ)$, we can construct the stable matrix $\bA$ as $(\bJ - \bR)\bQ$. 
Note that the optimization problem \eqref{eq:stable_learning1} includes constraints on the matrices $\btJ, \btR, \btQ$. With an appropriate parameterization for these matrices, we can re-formulate the optimization problem \eqref{eq:stable_learning1} as an unconstrained one. To that end, first, note that any skew-symmetric matrix $\btJ$ can be parameterized as
\begin{equation}\label{eq:par_ss}
	\tilde\bJ = \bar\bJ - \bar\bJ^\top,
\end{equation}
where $\bar\bJ\in \Rnn$ is a square matrix. 
Moreover, to remove the symmetric positive (semi)definite constraints, we parameterize $	\tilde\bQ$ and $	\tilde\bR$ as
\begin{equation}\label{eq:par_spd}
		\tilde\bQ = \bar\bQ\bar\bQ^\top \quad \text{and} \quad 	\tilde\bR = {\bar\bR}\bar\bR^\top, 
\end{equation} 
for any given matrices $\bar\bQ$ and $\bar\bR \in \R^{n \times n}$.  Notice that the parametrization \eqref{eq:par_spd} only guarantees that $\tilde\bQ$ is positive semidefinite unless the matrix $\bar\bQ$ is full rank. In order to avoid the rank constraint, we will relax the problem, where we allow $\tilde\bQ$ to be semidefinite. 
%Similarly, since the matrix $\bJ$ is skew-symmetric, it can be parametrized as 
%\begin{equation*}\label{eq:par_ss}
%	\bJ = \btJ^{\top}-\btJ,
%\end{equation*} 
%for given matrices $\btJ\in \R^{n \times n}$. 
These parametrizations enable us to re-formulate the objective constraint function \eqref{eq:stable_learning1} as the unconstrained objective function:
\begin{equation}\label{eq:Loss_function}
	(\bar\bJ, \bar\bR, \bar\bQ)= \underset{\acute\bJ, \acute\bQ, \acute\bR}{\arg\min} \left\|\dot{\bX}-(\acute\bJ -\acute\bJ^{\top} - \acute\bR\acute\bR^\top) \acute\bQ\acute\bQ^\top\bX\right\|_F.
\end{equation}
This then results in a stable matrix $\bA = \left(\bar\bJ -\bar\bJ^{\top} - \bar\bR\bar\bR^\top\right) \bar\bQ\bar\bQ^\top$. 

%%%%%%%%%%%%%%%%%%%%%%%%%%%
%%%%%%%%%%%%%%%%%%%%%%%%%%%
%%%%%%%%%%%%%%%%%%%%%%%%%%%
\section{Un-rolling Integrating Schemes to Avoid Derivative Information Requirement }\label{sec:intergrationscheme}
One of the challenges to learning continuous-time LDS (see, e.g., \eqref{eq:Loss_function}) is the requirement for derivative information, which can be difficult to accurately  estimate  from noisy and scarce data. As an alternative, researchers have explored the use of integral forms of differential equations and numerical techniques to learn continuous systems \cite{gonzalez1998identification, goyal2022discovery,uy2022operator} .  With a similar spirit, we focus on embedding a Runge-Kutta fourth-order integration scheme to predict the dependent variable $\bx$ at the next time step. Given a stable linear system \eqref{eq:LinearDynamics} and the state $\bx$ at time $t_i$, denoted by $\bx(t_i)$, we can estimate $\bx(t_{i+1})$ as follows:
\begin{equation}\label{eq:integration}
\bx(t_{i+1}) = \Phi_{\dt}\left(\bA, \bx(t_i)\right) := \bx(t_i) + \dfrac{\dt}{6}\left(\bh_1 +2\bh_2 + 2\bh_3 + \bh_4\right),
\end{equation}
where
\begin{equation}
\begin{aligned}
\bh_1 &= \bA\bx(t_i), & \bh_2 &= \bA\left(\bx(t_i) + \dfrac{\dt}{2}\bh_1\right), &
\bh_3 &= \bA\left(\bx(t_i) + \dfrac{\dt}{2}\bh_2\right), & \bh_4 &= \bA\left(\bx(t_i) + \bh_3\right).
\end{aligned}
\end{equation}
We can then modify the objective function \eqref{eq:Loss_function} as follows:
\begin{equation}\label{eq:Loss_function_RK}
\min_{\bar\bJ,\, \bar\bR, \bar\bQ} \sum_i\left\|{\bx}(t_{i+1})- \Phi_{\texttt{\dt}}\left(\bar\bA, \bx(t_i)\right)\right\|, \quad \text{with}~~\bar\bA = \left(\bar\bJ - \bar\bJ^{\top} - \bar\bR\bar\bR^\top\right) \bar\bQ\bar\bQ^\top,
\end{equation}
thus avoiding the necessity of derivative information.  Consequently, we have the optimization problem \eqref{eq:Loss_function_RK} whose solutions are stable by construction. This proposed methodology is called \emph{stable linear system inference} (\newmethod) throughout this paper.   

Moreover, we highlight that one can use any other numerical integration or higher-order schemes, as well as multi-step unfolding schemes. In fact, the concept of adjoint sensitivity, as proposed in neural ODEs \cite{chen2018neural}, can be employed to perform efficient computation (in term of memory cost) through any numerical integrator, but it often comes with more CPU cost. 

%\blue{In this work, the combination of the loss function in \eqref{eq:Loss_function} with integrator schemes \eqref{eq:integration} leads to an optimization problem whose solutions are stable by construction. In order to solve the optimization problem, we make use of stochastic gradient descent methods embedded in PyTorch.  This proposed methodology is called \emph{stable linear system inference} (\newmethod) throughout this paper.     }

%%%%%%%%%%%%%%%%%%%%%%
%%%%%%%%%%%%%%%%%%%%%%
%%%%%%%%%%%%%%%%%%%%%%
%%%%%%%%%%%%%%%%%%%%%%
\section{Further Considerations}\label{sec:furtherconsider}
In this section, we discuss a few possible considerations and extensions for \newmethod.
%%%%%
\subsection{Low-dimensional representation and analog to compressed DMD}\label{sec:compression}
Collecting data from physical systems often results in high-dimensional data with thousands to millions of degrees of freedom. As a result, building models from such high-dimensional data are computationally expensive. However, mostly these high-dimensional data reside in a low-dimensional subspace, so we can obtain a compact, low-dimensional representation by projecting onto such a subspace. Techniques such as principal component analysis (PCA), proper-orthogonal decomposition (POD), and autoencoders can be used to achieve this goal. Here, we describe the POD approach, which is commonly used in the reduced-order modeling community.

Singular-value decomposition (SVD) is a key element of POD and allows us to determine a suitable low-dimensional subspace. To that aim, let us consider a data matrix $\bX \in \R^{n\times N}$, where $n$ is the dimension of the data, and $N$ is the number of measurements. We can determine the dominant subspace by performing the SVD of $\bX$, i.e.,
\begin{equation}
	\bX := \bU\Sigma \bV^{\top},
\end{equation}
where the dominating subspaces are contained in $\bU$, and the singular values stored in $\Sigma$ determine their significance.  $\Sigma := \diag{\sigma_1,\ldots,\sigma_n}$ is a diagonal matrix with $\sigma_{i+1}\geq \sigma_i$.  By using the subspace spanned by the $r$ most dominant left singular vectors, denoted by $\bU_r$, we can obtain a low-dimensional representation as follows:
\begin{equation}
	\bX_r = \bU_r^{\top}\bX,
\end{equation}
where $\bX_r \in \R^{r\times N}$. Moreover, we can reconstruct our high-dimensional data $\bX$, indicated by $\tilde\bX$, using  $\bX_r$, e.g., $\tilde\bX = \bU_r\bX_r$. Additionally, the information loss due to projection and re-projection can be measured as follows:
\begin{equation}
	\|\bX - \tilde\bX\|_2 \leq  \sum_{r+1}^n\sigma_i.
\end{equation}
Using the above relation as an indicator, we can choose a suitable $r$. Finally, once we have a low-dimensional representation of the data $\bX_r$, we can use the method \newmethod~described in the previous sections to learn linear dynamical systems in the low-dimensional space, thus avoiding numerical computations in high dimensions, which may be too expensive. 

\subsection{Systems controlled by external inputs} 
Many dynamic processes are controlled by external inputs, which can result in different dynamic behavior when varied. To account for this in our modeling, we consider the following model hypothesis:
\begin{equation}\label{eq:linear_control}
	\dot{\bx}(t) =  \bA\bx(t) + \bB\bu(t).
\end{equation}
Given the trajectories $\bx(t)$ subject to various control inputs, we aim to learn \eqref{eq:linear_control} while maintaining stability constraints. To this aim, we collect snapshots of the input functions as  follows
\begin{equation}\label{ea:input_data}
	\bU = \begin{bmatrix} 
		| & | & & |
		\\ 
		\bu(t_0) & \bu(t_1)& \ldots & \bu(t_N) 
		\\
		| & | & & |
	\end{bmatrix} \in \R^{1 \times N},
\end{equation}
and tailor the objective function in \eqref{eq:Loss_function} to include the control input. Hence, the resulting objective for inferring control systems is given by
\begin{equation}\label{eq:Loss_function_control}
	(\bar\bJ, \bar\bR, \bar\bQ, \bar\bB)= \underset{\acute\bJ, \acute\bQ, \acute\bR, \acute\bB}{\arg\min} \|\dot{\bX}-(\acute\bJ -\acute\bJ^{\top} - \acute\bR\acute\bR^\top) \acute\bQ\acute\bQ^\top\bX  \acute\bB\bU\|_F,
\end{equation}
and the learned dynamical system with control is guaranteed to be stable.

 With some slight modifications, we can easily extend the discussion from Subsection~\ref{subsec:stability_parameterization} and incorporate numerical schemes to avoid the derivative data. Specifically, we modify \eqref{eq:integration} as follows:
\begin{equation}\label{eq:integration_control}
	\bx(t_{i+1}) \approx \Phi^{\texttt{c}}_{\texttt{\dt}}\left(\bA, \bB, \bx(t_i),\bu\right) := \bx(t_i) + \dfrac{\dt}{6}\left(\bh_1 +2\cdot\bh_2 + 2\cdot\bh_3 + \bh_4\right), \qquad \text{with}~~\dt = t_{i+1} - t_i
\end{equation}
\begin{equation}
	\begin{aligned}
		\bh_1 &= \bA\bx(t_i) + \bB\bu(t_i), & \bh_2 &= \bA\left(\bx(t_i) + \dfrac{\dt}{2}\bh_1\right) + \bB\bu\left(t_i + \tfrac{\dt}{2}\right) \\
		\bh_3 &= \bA\left(\bx(t_i) +\dfrac{\dt}{2}\bh_2\right) +  \bB\bu\left(t_i + \tfrac{\dt}{2}\right), ~\text{and} & \bh_4 &= \bA\left(\bx(t_i) + \bh_3\right) +  \bB\bu(t_i + dt).
	\end{aligned}
\end{equation}
The objective function \eqref{eq:Loss_function_control} then changes as follows:
\begin{equation}\label{eq:Loss_function_RKcontrol}
	\min_{\bar\bJ,\, \bar\bR, \bar\bQ,\bar\bB} \sum_i\left\|{\bx}(t_{i+1})- \Phi^{\texttt{c}}_{\texttt{\dt}}\left(\bar\bA, \bar\bB,\bx(t_i)\right)\right\|_F, \quad \text{with}~~\bar\bA = \left(\bar\bJ - \bar\bJ^{\top} - \bar\bR\bar\bR^\top\right) \bar\bQ\bar\bQ^\top.
\end{equation}
This allows us to obtain a stable control system realization by construction and avoid the need to compute derivative information by combining it with a numerical integrator.

\subsection{Stable  linear models for predefined observers} 
It is well-known that nonlinear dynamic systems can be represented as linear systems in an infinite-dimensional Hilbert space, as described in \cite{koopman1931hamiltonian}. However, in order to use these models for engineering purposes, it is necessary to approximate this infinite-dimensional space. The extended DMD method proposed in \cite{williams2015data} aims to design observers that can accurately approximate infinite-dimensional operators using finite-dimensional linear operators. However, the methodology to learn stable models is maintained for newly designed observers. Hence, the method for learning stable realizations described in the previous section (\newmethod) can be readily applied in the context of extended DMD to learn stable linear operators for continuous-time systems.
%

%%%%%%%%%%%%%%%%%%%%%%%%%%%%
%%%%%%%%%%%%%%%%%%%%%%%%%%%%
%%%%%%%%%%%%%%%%%%%%%%%%%%%%
\section{Numerical Experiments}\label{sec:Exp}
In this section, we evaluate the performance of \newmethod~through three numerical examples. 
%\blue{We compare our procedure to the methodology proposed in \cite{uy2022operator}, referred to here as \emph{linear system inference} (\benchmark), which consists of inferring LDS using numerical integration un-rolling to avoid the need for derivative computation without enforcing stability of the learned operator.} 
We compare \newmethod~with the recently proposed methodology in \cite{uy2022operator} which allows inference for LDS using numerical integration un-rolling to avoid the need for derivative computation but without any enforcing of stability of the learned operator. We refer to it as \emph{linear system inference} (\benchmark).
 All experiments are conducted using PyTorch, with 20,000 updates performed using the Adam optimizer \cite{kingma2014adam} and a triangular cyclic learning rate ranging from $10^{-6}$ to $10^{-2}$. The coefficients of all matrices are randomly generated from a Gaussian distribution with a mean of $0$ and standard deviation of $0.1$.

%%%%
%
\subsection{Unsteady flow for cylinder-wake example}
In this example, we study the flow past a cylinder, a widely used benchmark problem in the literature (see, for instance, \cite{morKutBBetal16}). The Reynolds number is set to $\texttt{Re}= 100$ and we have $151$ vorticity measurements, which exhibits periodic oscillations. The data was collected on a grid of $199\times 449$. Since the data can be represented very well in a low-dimensional manifold, we compress the data using $31$ dominant POD modes, consequently reducing the complexity of the optimization problem and so for the learned model. We learn continuous-time linear models using \newmethod~and \benchmark. To assess the performance, we first compare the eigenvalues of these learned systems in \Cref{fig:clyinder_eigenvalues}, which clearly indicates 
\newmethod~yields a model with all eigenvalues in the left-half plane. In contrast, \benchmark~yields a model, possessing some unstable eigenvalues. Furthermore, we compute the inferred eigenfunctions from the model obtained with \newmethod. In  \Cref{fig:clyinder_eigenfuncs}, we plot the real part of the four dominant eigenfunctions  showing the dominant flow dynamics as expected.
\begin{figure}[!tb]
	\centering
	\includegraphics[width = 0.75\textwidth]{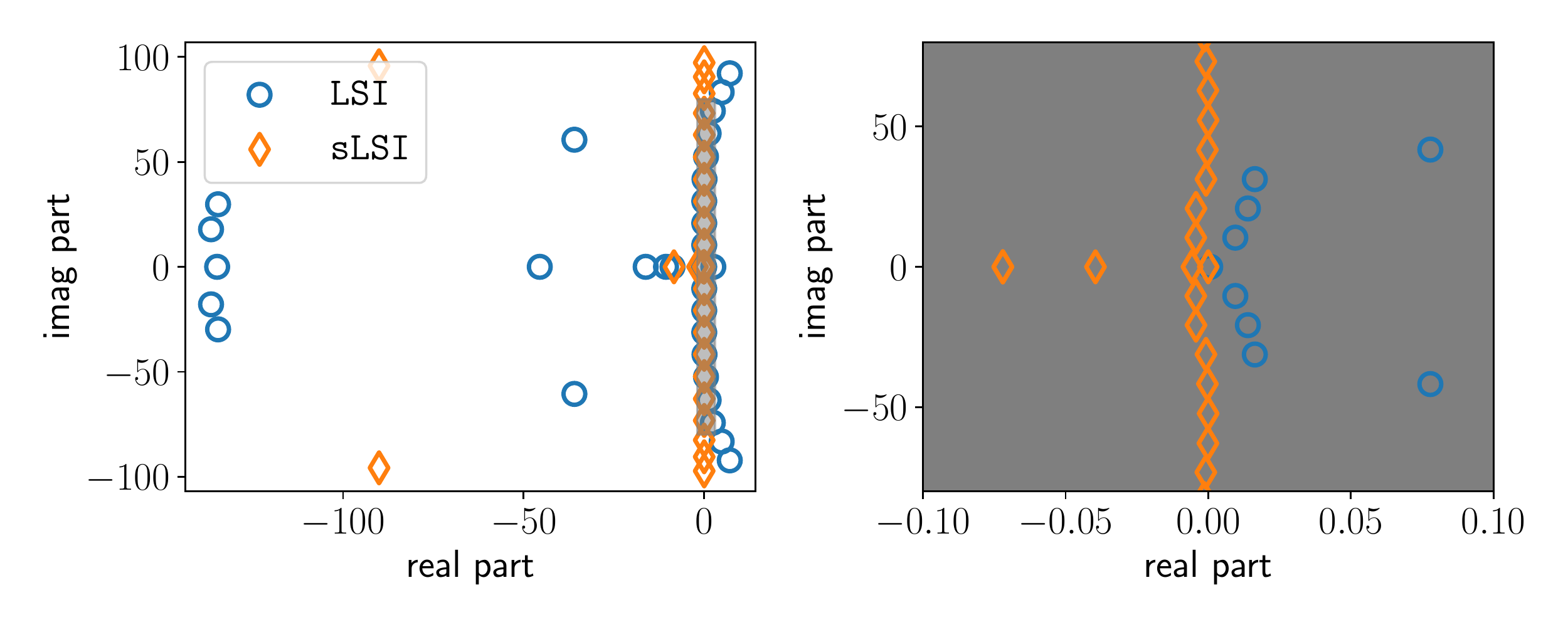}
	\caption{Flow-past cylinder example: A comparison of the eigenvalues of the learned models, which clearly shows 
		\newmethod~guarantees that all of the eigenvalues are in the left half-plane. The right plot shows a zoomed-in version of the left plot centered around the origin.}
	\label{fig:clyinder_eigenvalues}
\end{figure}
\begin{figure}[!tb]
	\centering
	\includegraphics[width = 0.35\textwidth]{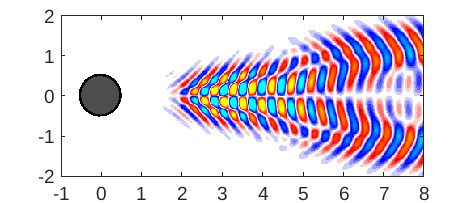}
	\includegraphics[width = 0.35\textwidth]{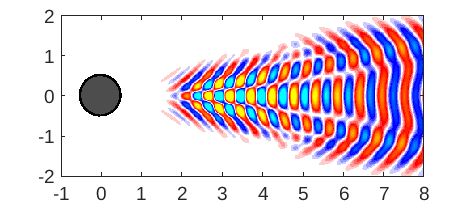}\\
	\includegraphics[width = 0.35\textwidth]{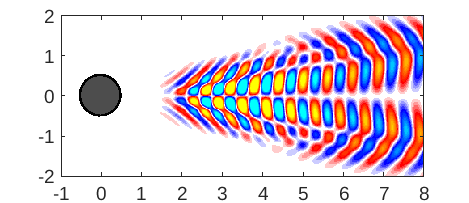}
	\includegraphics[width = 0.35\textwidth]{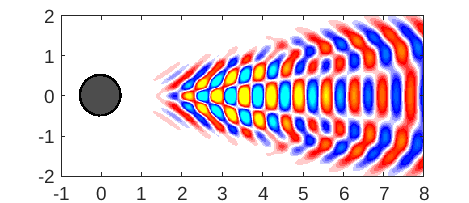}
	\caption{Flow-past cylinder example: The plots show the real part of the dominant eigenfunctions from the learned model via \newmethod.}
	\label{fig:clyinder_eigenfuncs}
\end{figure}

\subsection{Transporting flow}
\begin{wrapfigure}{o}{0.4\textwidth}
	\vspace{-0.25cm}
	
	\includegraphics[width =0.4\textwidth]{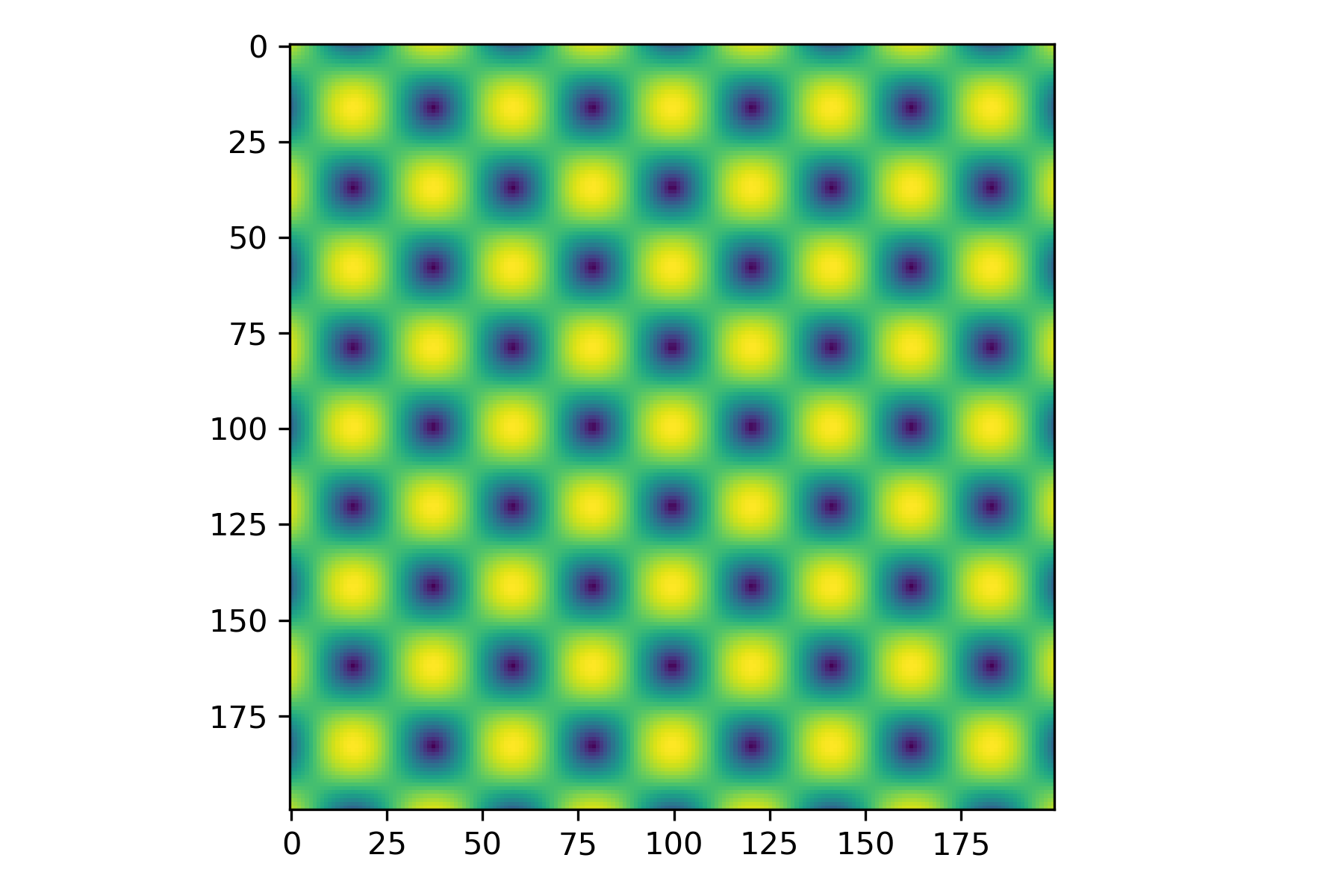}
	\vspace{-0.5cm}

	\caption{Transporting flow: The plot shows the magnitude of the velocity at time $t = 0$.}
	\label{fig:transportflow_data}

\end{wrapfigure}
Next, we consider a two-dimensional transporting flow whose $x$ and $y$-directions velocities, denoted by $u$ and $v$, are given by 
\begin{equation}
	\begin{aligned}
		u(x,y,t) &= \sin(5(t-x))\sin(5(t-y)),\\
		v(x,y,t) &= \cos(5(t-x))\cos(5(t-y)).
	\end{aligned}
\end{equation}

We consider the spatial domain $[-1.5,1.5]\times[-1.5,1.5]$ and $200$ equidistant points in both directions to sample the flow. Then, we collect $100$ data points in time interval $[0,5]$, and the flow at time $t=0$ is shown in \Cref{fig:transportflow_data}. 

The data in this study has a high dimensionality, with a total of $2\cdot 40,000$ variables. To address this, we first aim to find a low-dimensional representation of the data using POD. We discover that the data can be described by only three dominant modes (up to  the machine precision). Hence, we project the high-dimensional data onto these dominant subspaces and use them to learn linear dynamical models with \newmethod~and \benchmark. The eigenvalues of these models are depicted in \Cref{fig:transport_eigenvalues}. The figure clearly demonstrates that the proposed methods ensure stability. In contrast, when stability is not imposed, then the resulting systems can be unstable, which is the case for this example. Moreover, we show the eigenfunctions related to the velocities in the $x$ and $y$-directions in \Cref{fig:transport_eigenfuncs}, showing transportion of the flow in a specific direction. 
\begin{figure}[tb]
	\centering
	\includegraphics[width = 0.7\textwidth]{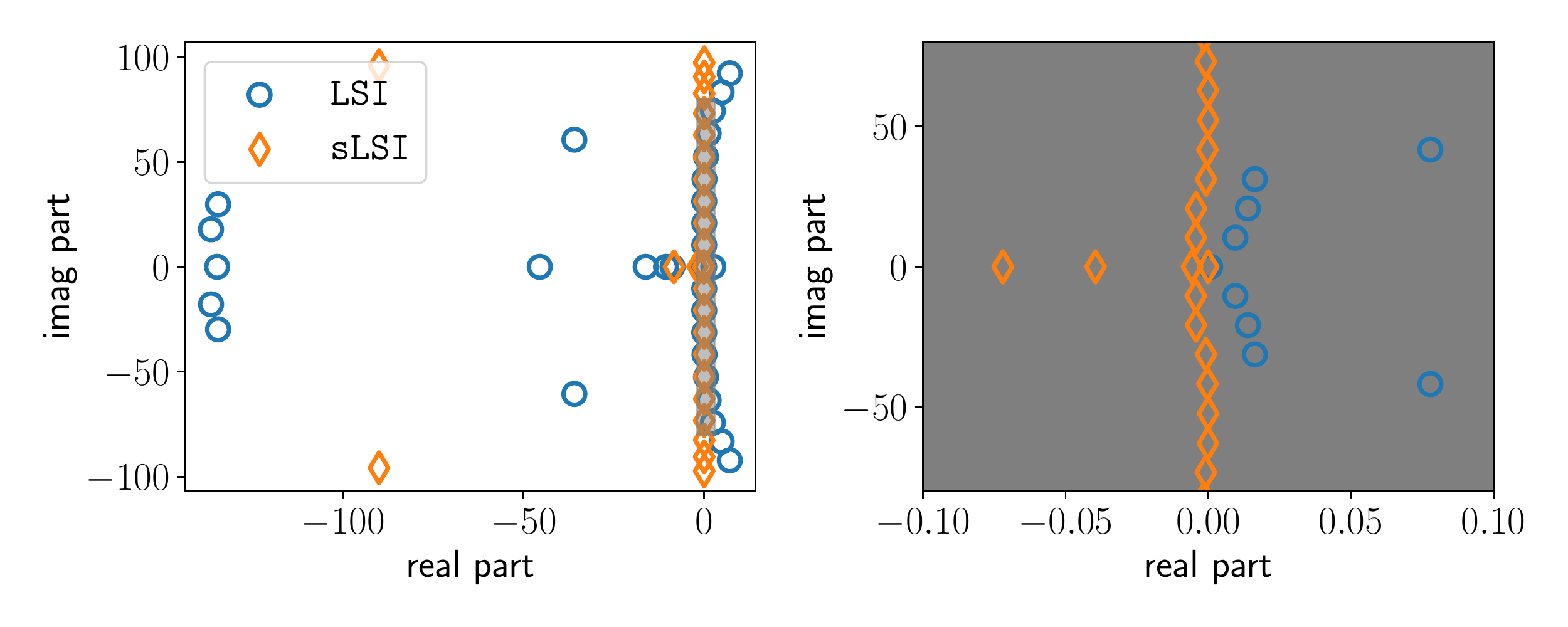}
	\caption{Transport flow example: A comparison of the eigenvalues of the learned models, which clearly shows \newmethod~guarantees to have all eigenvalues in the left half-plane. The right plot shows a zoomed-in version of the left plot centered around the origin.}
	\label{fig:transport_eigenvalues}
\end{figure}
\begin{figure}[!tb]
	\centering
	\includegraphics[width = 0.33\textwidth, trim= 0cm 0cm 19cm 0cm, clip]{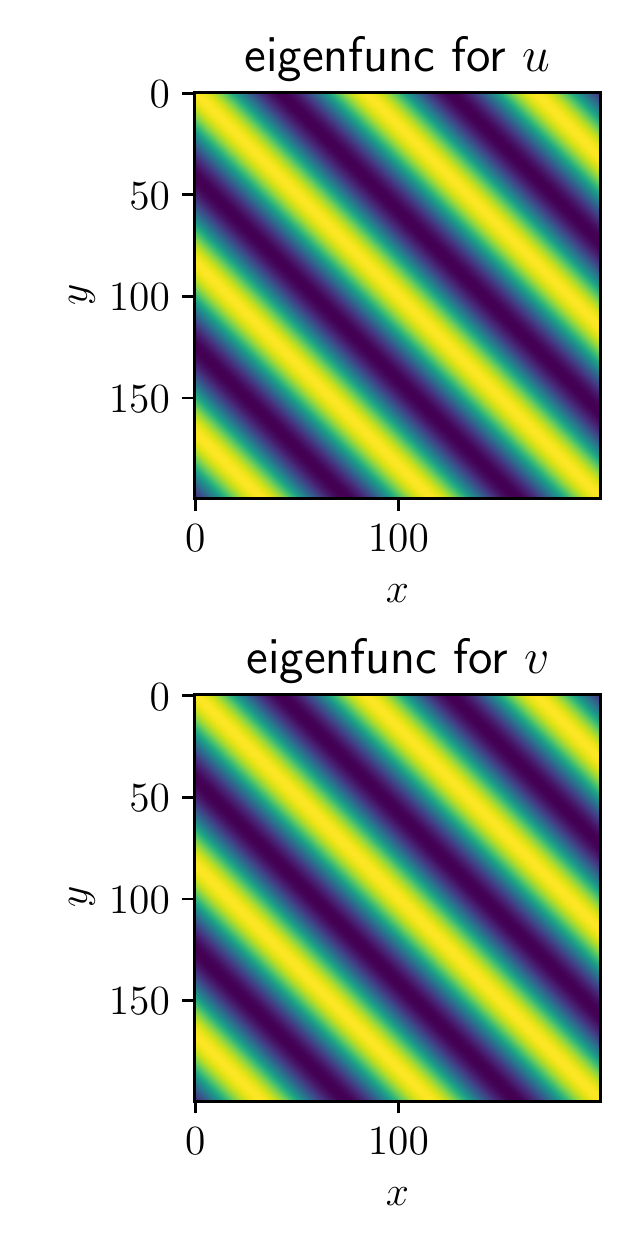}
	\includegraphics[width = 0.66\textwidth, trim= 12.5cm 0cm 0cm 0cm, clip]{\pathfig/Superposedflow/ImposeStability/eigenfunc.pdf}
	\caption{Transport flow example: The plots show the dominant eigenfunctions for $u$ and $v$, i.e., the velocities in the $x$ and $y$-directions, respectively.}
	\label{fig:transport_eigenfuncs}
\end{figure}

%%%%%%%%%%%%%%
%%%%%%%%%%%%%%
%%%%%%%%%%%%%%
\subsection{Burgers' equations}
In our last example, we consider the one-dimensional Burgers' equation with the following initial and boundary:
\begin{equation}
	\begin{aligned}
		v_t +  v\dot v_\zeta&=  + \mu v_{\zeta\zeta} & & \text{in}~(0,1)\times (0,T), \\
		v_\zeta(0,\cdot)  & = 0, &&\\
		u_\zeta(1,\cdot) &= 0, &&\\
		u(\zeta,0) &= v_0(\zeta) & & \text{in}~(0,1),
	\end{aligned}
\end{equation}
where $v_t$ and $v_{\zeta}$ denote the derivative of $v$ with respect to time $t$ and space $\zeta$, and $v_{\zeta\zeta}$ denotes the second  derivative of $v$ with respect to $\zeta$. The PDE is discretized using $1,000$ equidistant grid points. We collect $500$ data points for a given initial condition in the time interval $[0,1]$s. Furthermore, we assume the initial conditions to be $v_0(\zeta) = 1 + \sin((2f\zeta+1)\pi)$, where $f = [1, 1.25,\ldots, 4.75,5.0]$. Hence, we have the data corresponding to in-total $17$ different initial conditions. Next, we split the data into training and testing, and for the testing, we take the initial conditions with $f = [1.75, 2.75, 3.75]$, and the rest are considered for learning. 

First of all, since the data are high-dimensional, we seek to determine a low-dimensional representation of the training data by projecting it using the dominant POD basis. We consider $21$ modes, which capture more than $99.9\%$ of the total energy. Next, we employ \newmethod~and \benchmark~to obtain linear models. For comparison, we first look at the eigenvalues of both learned models, which is shown in \Cref{fig:burgers_eigenvalues}. It indicates the guaranteed stability of 
\newmethod, whereas \benchmark~has an eigenvalue in the right-half plane, indicating potential instability in long-term prediction.

Moreover, we compare the learned models on the left-out initial conditions for testing. For this, we first project the initial condition onto a $21$-dimensional subspace using the same POD basis as used for the training data. We integrate the system for the time-interval $[0,1]$ and take $500$ steps in this interval. For comparison, we examine the relative $L_2$-error for all three testing initial conditions, shown in \Cref{fig:burgers_test}. It shows that \newmethod~even performs well, up to a factor of two, in the time-domain simulations despite having stability constraints on the linear operator while inferring. We also compare the time-domain simulations for one of the test cases in \Cref{fig:burgers_test_time_simulations}. It is done by first integrating the low-dimensional model, which is followed by re-projecting onto the higher-dimensional using the POD basis. 

\begin{figure}[tb]
	\centering
	\includegraphics[width = 0.9\textwidth]{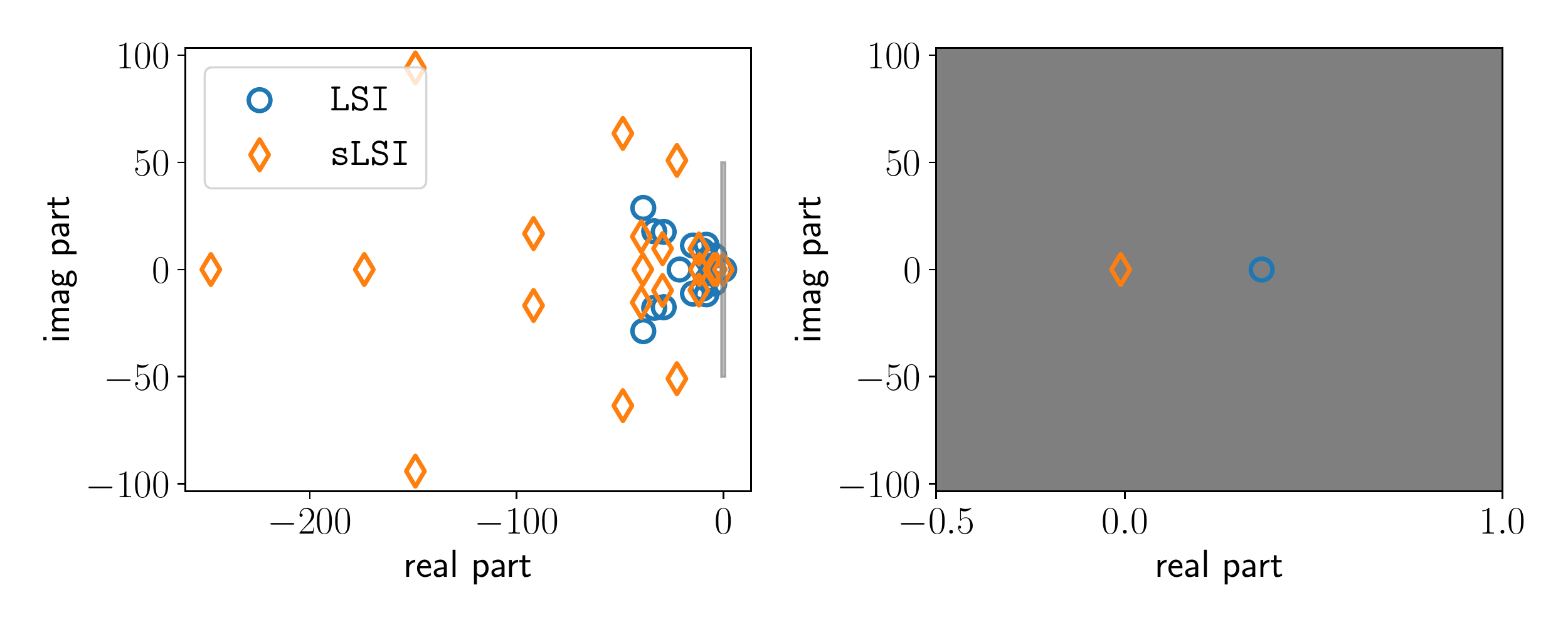}
	\caption{Burgers' equation: A comparison of the eigenvalues of the learned models, which clearly shows that \newmethod~guarantees not to have any eigenvalues in the right half-plane.}
	\label{fig:burgers_eigenvalues}
\end{figure}

\begin{figure}[tb]
	\centering
	\includegraphics[width = 0.5\textwidth]{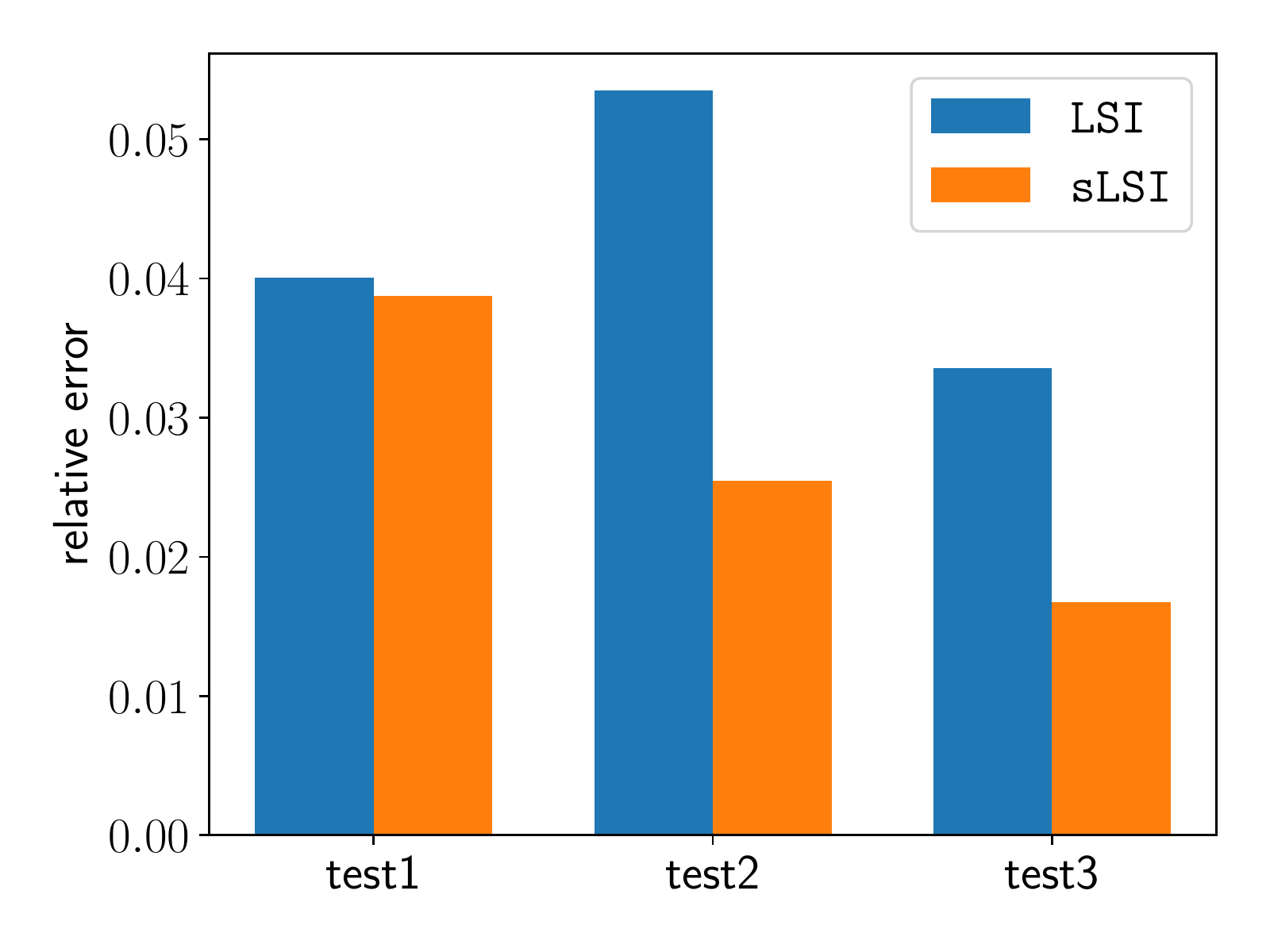}
	\caption{Burgers' equation: A performance of the learned models on the test data. }
	\label{fig:burgers_test}
\end{figure}

\begin{figure}[tb]
	\centering
	\includegraphics[width = 0.9\textwidth]{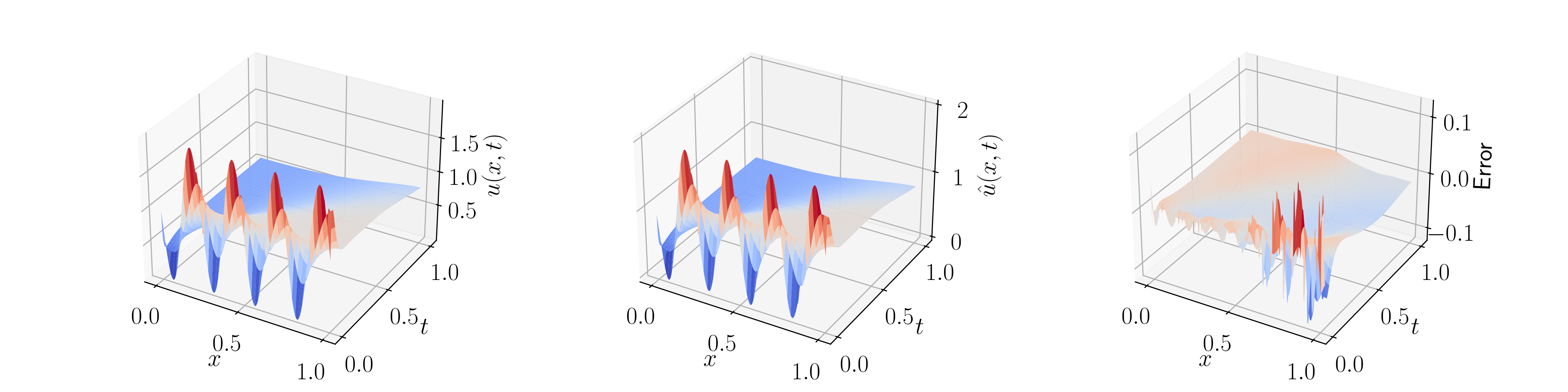}
	\caption{Burgers' equation: A comparison of the time-domain simulation of \newmethod~on a test data with $f = 3.75$. The left figure is the ground truth; the middle is \newmethod, and the right one shows the error between them. }
	\label{fig:burgers_test_time_simulations}
\end{figure}

\section{Conclusions}\label{sec:Conc}
We discussed a method for learning continuous-time  linear dynamical systems (\newmethod)~with two key features. First, the \newmethod~approach makes use of the parameterization of stable matrices, ensuring the stability of the inferred systems through direct encoding. The second feature of the proposed method is the use of an integral form for learning continuous-time systems, thus eliminating the requirement for explicit derivative computation. Several extensions are presented as well.  Numerical examples demonstrate the stability features of the learned models. In contrast, this characteristic can be broken if stable-matrix parameterization is not applied. Moreover, the performance of the learned models using \newmethod~can lead to an improvement of up to a factor of two in time-domain simulations.

This work opens up the possibility for further research. As emphasized, having an adequate parameterization, which directly encodes desired properties, guarantees to have those properties by construction. Therefore, it would be intriguing to explore the possibility of enforcing additional physical laws, such as conservation of mass and energy, through the use of appropriate parameterizations. An extension to parametric varying systems could also be promising.

\bibliographystyle{elsarticle-num} 
\bibliography{mor,igorBiblio}

\begin{thebibliography}{10}
\expandafter\ifx\csname url\endcsname\relax
  \def\url#1{\texttt{#1}}\fi
\expandafter\ifx\csname urlprefix\endcsname\relax\def\urlprefix{URL }\fi
\expandafter\ifx\csname href\endcsname\relax
  \def\href#1#2{#2} \def\path#1{#1}\fi

\bibitem{rowley2009spectral}
C.~W. Rowley, I.~Mezic, S.~Bagheri, P.~Schlatter, D.~S. Henningson, Spectral
  analysis of nonlinear flows, Journal of Fluid Mechanics 641 (2009) 115--1127.
\newblock \href {https://doi.org/10.1017/S0022112009992059}
  {\path{doi:10.1017/S0022112009992059}}.

\bibitem{morKutBBetal16}
J.~N. Kutz, S.~L. Brunton, B.~W. Brunton, J.~L. Proctor, Dynamic Mode
  Decomposition: Data-Driven Modeling of Complex Systems, Society of Industrial
  and Applied Mathematics, Philadelphia, USA, 2016.
\newblock \href {https://doi.org/10.1137/1.9781611974508}
  {\path{doi:10.1137/1.9781611974508}}.

\bibitem{morSch10}
P.~J. Schmid, Dynamic mode decomposition of numerical and experimental data, J.
  Fluid Mech. 656 (2010) 5--28.
\newblock \href {https://doi.org/10.1017/S0022112010001217}
  {\path{doi:10.1017/S0022112010001217}}.

\bibitem{brunton2015compressed}
S.~L. Brunton, J.~L. Proctor, J.~H. Tu, J.~N. Kutz, Compressed sensing and
  dynamic mode decomposition, J. Comput. Dynamics 2~(2) (2015) 165.

\bibitem{williams2015data}
M.~O. Williams, I.~G. Kevrekidis, C.~W. Rowley, A data--driven approximation of
  the {K}oopman operator: {E}xtending dynamic mode decomposition, J. Nonlinear
  Science 25~(6) (2015) 1307--1346.

\bibitem{morProBK16}
J.~L. Proctor, S.~L. Brunton, J.~N. Kutz, Dynamic mode decomposition with
  control, {SIAM} J. Appl. Dyn. Syst. 15~(1) (2016) 142--161.
\newblock \href {https://doi.org/10.1137/15M1013857}
  {\path{doi:10.1137/15M1013857}}.

\bibitem{juang1985eigensystem}
J.-N. Juang, R.~S. Pappa, An eigensystem realization algorithm for modal
  parameter identification and model reduction, J. Guidance, Control, and
  Dynamics 8~(5) (1985) 620--627.

\bibitem{van1994n4sid}
P.~Van~Overschee, B.~De~Moor, {N4SID}: Subspace algorithms for the
  identification of combined deterministic-stochastic systems, Automatica
  30~(1) (1994) 75--93.

\bibitem{viberg1995subspace}
M.~Viberg, Subspace-based methods for the identification of linear
  time-invariant systems, Automatica 31~(12) (1995) 1835--1851.

\bibitem{morMayA07}
A.~J. Mayo, A.~C. Antoulas, A framework for the solution of the generalized
  realization problem, Linear Algebra Appl. 425~(2--3) (2007) 634--662.
\newblock \href {https://doi.org/10.1016/j.laa.2007.03.008}
  {\path{doi:10.1016/j.laa.2007.03.008}}.

\bibitem{morGusS99}
B.~Gustavsen, A.~Semlyen, Rational approximation of frequency domain responses
  by vector fitting, {IEEE} Trans. Power Del. 14~(3) (1999) 1052--1061.
\newblock \href {https://doi.org/10.1109/61.772353}
  {\path{doi:10.1109/61.772353}}.

\bibitem{morDrmGB15a}
Z.~Drma{\v{c}}, S.~Gugercin, C.~Beattie, Vector fitting for matrix-valued
  rational approximation, {SIAM} J. Sci. Comput. 37~(5) (2015) A2346--A2379.
\newblock \href {https://doi.org/10.1137/15M1010774}
  {\path{doi:10.1137/15M1010774}}.

\bibitem{nakatsukasa2018aaa}
Y.~Nakatsukasa, O.~S{\`e}te, L.~N. Trefethen, The {AAA} algorithm for rational
  approximation, {SIAM} J. Sci. Comput. 40~(3) (2018) A1494--A1522.

\bibitem{morPehW16}
B.~Peherstorfer, K.~Willcox, Data-driven operator inference for nonintrusive
  projection-based model reduction, Comp. Meth. Appl. Mech. Eng. 306 (2016)
  196--215.
\newblock \href {https://doi.org/10.1016/j.cma.2016.03.025}
  {\path{doi:10.1016/j.cma.2016.03.025}}.

\bibitem{chui1996realization}
N.~L.~C. Chui, J.~M. Maciejowski, Realization of stable models with subspace
  methods, Automatica 32~(11) (1996) 1587--1595.

\bibitem{boots2007constraint}
B.~Boots, G.~J. Gordon, S.~Siddiqi, A constraint generation approach to
  learning stable linear dynamical systems, Adv. Neural Inform. Processing Sys.
  20 (2007).

\bibitem{lacy2003subspace}
S.~L. Lacy, D.~S. Bernstein, Subspace identification with guaranteed stability
  using constrained optimization, {IEEE} Trans. Autom. Control 48~(7) (2003)
  1259--1263.

\bibitem{morBenHM18}
P.~Benner, C.~Himpe, T.~Mitchell, On reduced input-output dynamic mode
  decomposition, Adv. Comput. Math. 44~(6) (2018) 1751--1768.
\newblock \href {https://doi.org/10.1007/s10444-018-9592-x}
  {\path{doi:10.1007/s10444-018-9592-x}}.

\bibitem{gillis2019approximating}
N.~Gillis, M.~Karow, P.~Sharma, Approximating the nearest stable discrete-time
  system, Linear Algebra Appl. 573 (2019) 37--53.

\bibitem{mamakoukas2020memory}
G.~Mamakoukas, O.~Xherija, T.~Murphey, Memory-efficient learning of stable
  linear dynamical systems for prediction and control, Adv. Neural Inform.
  Processing Sys. 33 (2020) 13527--13538.

\bibitem{sawant2021physics}
N.~Sawant, B.~Kramer, B.~Peherstorfer, Physics-informed regularization and
  structure preservation for learning stable reduced models from data with
  operator inference, arXiv preprint arXiv:2107.02597 (2021).

\bibitem{gillis2017computing}
N.~Gillis, P.~Sharma, On computing the distance to stability for matrices using
  linear dissipative {H}amiltonian systems, Automatica 85 (2017) 113--121.

\bibitem{chen2018neural}
R.~T. Chen, Y.~Rubanova, J.~Bettencourt, D.~K. Duvenaud, Neural ordinary
  differential equations, Adv. Neural Inform. Processing Sys. 31 (2018).

\bibitem{gonzalez1998identification}
R.~Gonz{\'a}lez-Garc{\'\i}a, R.~Rico-Mart{\`\i}nez, I.~G. Kevrekidis,
  Identification of distributed parameter systems: A neural net based approach,
  Computers \& Chemical Engineering 22 (1998) S965--S968.

\bibitem{goyal2022discovery}
P.~Goyal, P.~Benner, Discovery of nonlinear dynamical systems using a
  {R}unge-{K}utta inspired dictionary-based sparse regression approach, Proc.
  Royal Society A: Mathematical, Physical and Engineering Sciences 478~(2262)
  (2022) 20210883.
\newblock \href {https://doi.org/10.1098/rspa.2021.0883}
  {\path{doi:10.1098/rspa.2021.0883}}.

\bibitem{uy2022operator}
W.~I.~T. Uy, D.~Hartmann, B.~Peherstorfer, Operator inference with roll outs
  for learning reduced models from scarce and low-quality data, arXiv preprint
  arXiv:2212.01418 (2022).

\bibitem{koopman1931hamiltonian}
B.~O. Koopman, Hamiltonian systems and transformation in {H}ilbert space, Proc.
  Nat. Acad. Sci. U.S.A. 17~(5) (1931) 315--318.

\bibitem{kingma2014adam}
D.~P. Kingma, J.~Ba, Adam: {A} method for stochastic optimization, arXiv
  preprint arXiv:1412.6980 (2014).

\end{thebibliography}

\end{document}